\documentclass[a4paper,10pt]{amsart}

\usepackage{amsmath,amssymb,amsthm,a4wide}

\usepackage{tikz}
\usetikzlibrary{shapes}
\usetikzlibrary{intersections}
\usepackage{pgfplots}
\usepackage{mdframed}
\usepackage{graphicx}
\usepackage{graphics}
\usepackage{caption}

\parindent 0pt

\newtheorem*{thm}{Theorem}

\def\nodeDist{1.2cm}
\tikzstyle{origVertex}      = [draw, black, fill, shape=circle]
\tikzstyle{newVertex}       = [draw, black,  fill, shape=circle]
\tikzstyle{invisibleVertex} = [shape=circle]
\tikzstyle{origEdge}      = [black]
\tikzstyle{newEdge}       = [black]
\tikzstyle{invisibleEdge} = [draw opacity=0]

\begin{document}

\title[Spectral Echolocation embedding]{Spectral Echolocation via the Wave Embedding}
\author{Alexander Cloninger}
\address[Alexander Cloninger]{Department of Mathematics, Program in Applied Mathematics, Yale University, New Haven, CT 06510, USA}
\email{alexander.cloninger@yale.edu}

\author{Stefan Steinerberger}
\address[Stefan Steinerberger]{Department of Mathematics, Yale University, 06511 New Haven, CT, USA}
\email{stefan.steinerberger@yale.edu}

\begin{abstract} Spectral embedding uses eigenfunctions of the discrete Laplacian on a weighted graph to obtain coordinates for an embedding of an abstract
data set into Euclidean space. We propose a new pre-processing step of first using the eigenfunctions to simulate a low-frequency wave moving over the data and using both position as well
as change in time of the wave to obtain a refined metric to which classical methods of dimensionality reduction can then applied. This is motivated by
the behavior of waves, symmetries of the wave equation and the hunting technique of bats. It is shown to be effective in practice and also works for
other partial differential equations -- the method yields improved results even for the classical heat equation.

\end{abstract}
\maketitle

\section{Introduction}
Spectral embedding methods are based on analyzing Markov chains on a high-dimensional data set $\left\{x_i\right\}_{i=1}^{n} \subset \mathbb{R}^d$. There are a variety of different methods, see e.g. Belkin \& Niyogi \cite{belk}, Coifman \& Lafon \cite{coif1}, Coifman \& Maggioni \cite{coif2}, Donoho \& Grimes \cite{donoho},
Roweis \& Saul \cite{rs},  Tenenbaum, de Silva \& Langford \cite{ten}, and Sahai, Speranzon \& Banaszuk \cite{sahai}. A canonical choice for the weights of the graph is
declare that the probability $p_{ij}$ to move from point $x_j$ to $x_i$ to be
$$ p_{ij} =  \frac{ \exp\left(-\frac{1}{\varepsilon}\|x_i - x_j\|^2_{\ell^2(\mathbb{R}^d)}\right)}{\sum_{k=1}^{n}{ \exp\left(-\frac{1}{\varepsilon}\|x_k - x_j\|^2_{\ell^2(\mathbb{R}^d)}\right)}},$$
where $\varepsilon > 0$ is a parameter that needs to be suitably chosen. This Markov chain can also be interpreted as a weighted
graph that arises as the natural discretization of the underlying 'data-manifold'. Seminal results of Jones, Maggioni \& Schul \cite{jones} justify considering the solutions of
$$ -\Delta \phi_n = \lambda_n^2 \phi_n$$
as measuring the intrinsic geometry of the weighted graph. Here we always assume Neumann-boundary conditions whenever such a graph approximates a manifold.

\begin{figure}[h!]
\begin{tikzpicture}[scale=0.2\textwidth, node distance=\nodeDist,semithick]
 \node[origVertex] (0)              {};
 \node[origVertex] (1) [right of=0] {};
 \node[origVertex] (2) [above of=0] {};
 \node[origVertex] (3) [above of=1] {};
 \node[origVertex] (4) [above of=2] {};
 \path (0) edge[origEdge, out=-45, in=-135]               node[newVertex] (m0) {} (1)
           edge[origEdge, out= 45, in= 135]               node[newVertex] (m1) {} (1)
           edge[origEdge]                                 node[newVertex] (m2) {} (2)
       (1) edge[origEdge]                                 node[newVertex] (m3) {} (3)
       (2) edge[origEdge]                                 node[newVertex] (m4) {} (3)
           edge[origEdge]                                 node[newVertex] (m5) {} (4)
       (3) edge[origEdge, out=125, in=  55, looseness=30] node[newVertex] (m6) {} (3);
 \path (m0) edge[newEdge, out= 135, in=-135]                (m1)
           edge[newEdge, out=  45, in= -45]                (m1)
            edge[newEdge, out=-145, in=-135, looseness=1.7] (m2)
            edge[newEdge, out= -35, in= -45, looseness=1.7] (m3)
       (m1) edge[newEdge]                                   (m2)
            edge[newEdge]                                   (m3)
       (m2) edge[newEdge]                                   (m4)
            edge[newEdge, out= 135, in=-135]                (m5)
       (m3) edge[newEdge]                                   (m4)
            edge[newEdge, out=  45, in=  15]                (m6)
       (m4) edge[newEdge]                                   (m5)
            edge[newEdge, out=  90, in= 165]                (m6) ;
\draw [thick, xshift=0.006cm,yshift=0.005cm] plot [smooth, tension=1] coordinates { (0.03,0.01) (0.04,-0.01)  (0.06,0.01)  (0.055,0.02) (0.05, 0.01) (0.04, 0.01) (0.035, 0.01) (0.03, 0.02) (0.03,0.01) };
\end{tikzpicture}
\caption{Graphs that approximate smooth manifolds.}
\end{figure}
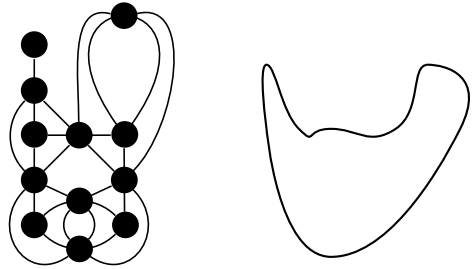

The cornerstone of spectral embedding is the realization that the map
\begin{align*}
\Phi: \left\{x_i\right\}_{i=1}^{n} &\rightarrow \mathbb{R}^k \\
x &\rightarrow  (\phi_1(x), \phi_2(x), \dots, \phi_k(x)).
\end{align*}
can be used as an effective way of reducing the dimensionality. One useful explanation that is often given is to observe that the Feynman-Kac formula
establishes a link between random walks on the weighted graph and the evolution of the heat equation. We observe that random walks have a tendency
to be trapped in clusters and are unlikely to cross over bottlenecks and, simultaneously, that the evolution of the heat equation can be explicitely given as
$$   \left[e^{t \Delta} f\right](x) = \sum_{n=1}^{\infty}{e^{-\lambda_n^2 t} \left\langle f, \phi_n \right\rangle \phi_n(x)}.$$
The exponential decay $e^{-\lambda_n^2 t}$ implies that the long-time dynamics is really governed by the low-lying eigenfunctions who then have
to be able to somehow reconstruct the random walks' inclination for getting trapped in clusters and should thus be able to reconstruct the cluster.
We believe this intuition to be useful and our further exposition will be based on this.

\section{The Wave equation}

\subsection{Introduction.} 

Once the eigenfunctions of the Laplacian have been understood, they imply complete control over the Cauchy problem for the wave equation
\begin{align*}
(\partial_t^2 - \Delta)u(x,t) = 0 \\
u(x,0) = f(x) \\
\partial_t u(x,0) = g(x)
\end{align*}
given by the eigenfunction expansion
$$ u(t,x) = \sum_{n = 1}^{\infty}{ \left[ \cos{(\lambda_ n t)} \left\langle f, \phi_n \right\rangle + \sin{(\lambda_ n t)} \left\langle g, \phi_n \right\rangle \right] \phi_n(x) }.$$
Throughout the rest of the paper, we will understand a solution of a wave equation as an operator of that form, which is meaningful on both smooth, compact manifolds equipped with
the Laplace-Beltrami operator $\Delta_g$ as well as on discrete weighted graphs equipped with the Graph Laplacian $\mathcal{L}$.
A notable difference is the lack of decay associated with the contribution coming from higher eigenfunctions -- this is closely related to the fact that the heat equation is highly smoothing
while the wave equation merely preserves regularity. In one dimension, this is is easily seen using
$$ (\partial_t^2 - \partial_x^2) = (\partial_t - \partial_x)( \partial_t + \partial_x)$$
implying that translations $u(x,t) = f(x+t)$ and $u(x,t) = f(x-t)$ are particular solutions of the wave equation which preserve their initial roughness).
However, the dynamics is still controlled by low-lying eigenfunctions in a time-averaged sense: note
that 
$$ \frac{1}{b-a}\int_{a}^{b}{u(t,x) dt} =  \sum_{n = 1}^{\infty}{ \left[  \left( \frac{1}{b-a} \int_{a}^{b}{\cos{(\lambda_ n t)} dt}\right) \left\langle f, \phi_n \right\rangle +  \left( \frac{1}{b-a} \int_{a}^{b}{ \sin{(\lambda_ n t)} dt}\right) \left\langle g, \phi_n \right\rangle \right] \phi_n(x) }$$ 
where the integrals decay as soon as $\lambda_n^{-1} \lesssim b-a$ since
$$   \frac{1}{b-a} \int_{a}^{b}{ \sin{(\lambda_ n t)} dt} \lesssim \min  \left( 1, \frac{1}{\lambda_n}  \frac{1}{b-a} \right).$$
Put differently, the average behavior over a certain time interval is much smoother than the instantenous behavior.
We will now prove that 'average' considerations within the framework of the wave equation allow us to reconstruct the
classical distance used in spectral embedding: then, after seeing that 'average' considerations recover the known framework,
we will investigate the behavior on shorter time-scales and use that as a way of deriving a finer approximation of
the underlying geometry of the given data.

\subsection{Recovering the spectral distance.}
We start by defining the usual spectral distance between two elements $x_0, y_0 \in \mathcal{M}$ w.r.t. the first $N$ eigenfunctions as
$$ d_N(x_0, y_0)^2 = \sum_{n=1}^{N}{(\phi_n(x_0) - \phi_n(y_0))^2}.$$
Equivalently, this may be understood as the Euclidean distance of the embedding
$$ d_N(x_0, y_0)^2 = \|\Phi_N(x_0) - \Phi_N(y_0)\|_{\ell^2(\mathbb{R}^N)}^2.$$
Given the dynamical setup of a wave equation, there is another natural way of measuring
distances. Given a point $z \in \mathcal{M}$, we define $u_{z}(x,t)$ as the solution of 
\begin{align*}
(\partial_t^2 - \Delta)u_{z}(x,t) &= 0 \\
u_{z}(x,0) &= \delta_{z} \\
\partial_t u_{z}(x,0) &= 0,
\end{align*}
where $\delta_z$ is the Dirac $\delta-$function in the point $z$. The solution starts out being centered at $z$ and then evolves naturally according
to the heat equation. Since we are mainly interested in computational aspects, we will use $u_{z,N}$ to denote the projection of $u_z$ onto the
first $N$ Laplacian eigenfunctions. It is natural to assume that if $x_0, y_0 \in \mathcal{M}$ are close, then $u_{x_0}(x,t)$ and $u_{y_0}(x,t)$ 
should be fairly similar on most points of the domain for most of the time.

\begin{figure}[h!]
\minipage{0.5\textwidth}
\begin{center}
\includegraphics[width=0.9\textwidth]{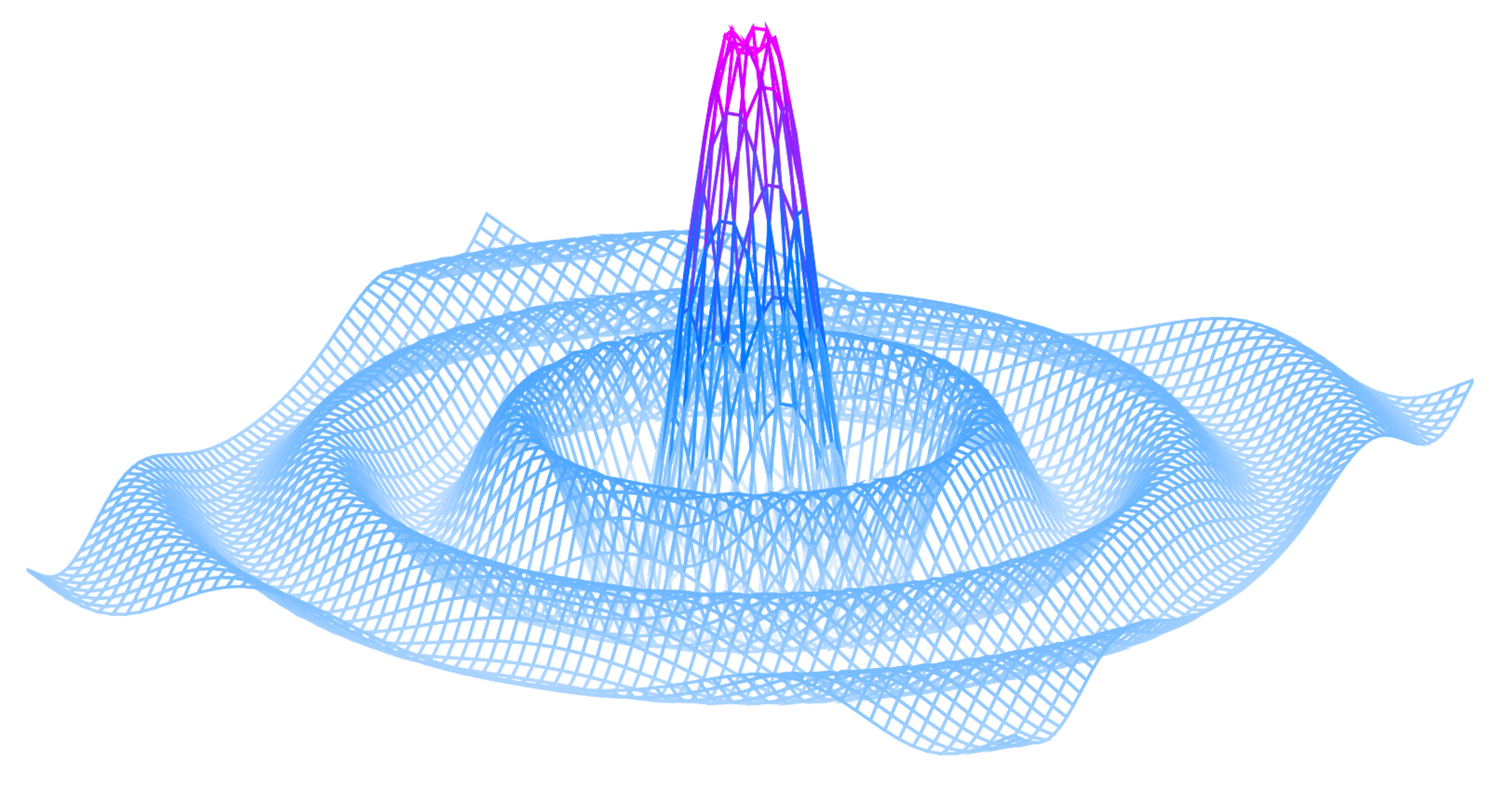}
\end{center}
\endminipage\hfill
\minipage{0.45\textwidth}
\begin{center}
\begin{tikzpicture}[scale=1.2]
\draw [ultra thick] (0,0) circle (0.02cm);
\draw [ultra thick] (0,0) circle (0.2cm);
\draw [ultra thick] (0,0) circle (0.4cm);
\draw [ultra thick] (0,0) circle (0.6cm);
\draw [ultra thick] (0,0) circle (0.8cm);
\draw [ultra thick] (0,0) circle (1cm);
\draw [ultra thick] (0,0) circle (1.2cm);
\draw [ultra thick] (0,0) circle (1.4cm);

\draw [ultra thick] (1.3,0) circle (0.02cm);
\draw [ultra thick] (1.3,0) circle (0.2cm);
\draw [ultra thick] (1.3,0) circle (0.4cm);
\draw [ultra thick] (1.3,0) circle (0.6cm);
\draw [ultra thick] (1.3,0) circle (0.8cm);
\draw [ultra thick] (1.3,0) circle (1cm);
\draw [ultra thick] (1.3,0) circle (1.2cm);
\draw [ultra thick] (1.3,0) circle (1.4cm);
\end{tikzpicture}
\end{center}
\endminipage\hfill
\captionsetup{width=0.9\textwidth}
\caption{An evolving Dirac $\delta-$function and the overlap between two solutions.}
\end{figure}

We will now prove the Main Theorem stating that this notion fully recovers the spectral distance.
\begin{thm}[Wave equation can recover spectral distance.] Assume $\mathcal{M}$ is connected (in the sense of $\lambda_1 > 0$). Then the average $L^2-$distance of the wave equation arising from Dirac measures placed in $x_0, y_0$ allows
to reconstruct the spectral distance $d_N(x_0,y_0)$ via
$$ \lim_{T \rightarrow \infty}{\frac{1}{T} \int_{0}^{T}{ \int_{\Omega}{(u_{x_0,N}(x,t) - u_{x_0,N}(x,t))^2 dx} dt}} = \frac{1}{2} d_N(x_0, y_0)^2.$$
\end{thm}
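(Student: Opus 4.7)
The plan is to exploit the explicit spectral expansion of the wave solution together with Parseval's identity, reducing everything to a classical time-average of $\cos^2(\lambda_n t)$.

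First I would write down the projected solution explicitly. Because $\partial_t u_{z,N}(\cdot,0) = 0$ and $\langle \delta_z, \phi_n\rangle = \phi_n(z)$, the formula quoted in the previous subsection gives
$$ u_{z,N}(x,t) = \sum_{n=1}^{N} \cos(\lambda_n t)\, \phi_n(z)\, \phi_n(x). $$
Subtracting the two solutions and factoring,
$$ u_{x_0,N}(x,t) - u_{y_0,N}(x,t) = \sum_{n=1}^{N} \cos(\lambda_n t)\,\bigl(\phi_n(x_0)-\phi_n(y_0)\bigr)\,\phi_n(x). $$

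Second, I would compute the spatial $L^2$ norm by Parseval/orthonormality of $\{\phi_n\}$. For each fixed $t$,
$$ \int_{\Omega} \bigl(u_{x_0,N}(x,t) - u_{y_0,N}(x,t)\bigr)^2\, dx = \sum_{n=1}^{N} \cos^2(\lambda_n t)\,\bigl(\phi_n(x_0)-\phi_n(y_0)\bigr)^2. $$
This is the key identification that turns a PDE question into a finite sum of trigonometric averages.

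Third, I would take the Ces\`aro time-average. Using $\cos^2\theta = \tfrac{1}{2}(1+\cos 2\theta)$ and the elementary bound
$$ \left|\frac{1}{T}\int_0^T \cos(2\lambda_n t)\, dt\right| \le \frac{1}{T\lambda_n}, $$
which is valid precisely because $\lambda_n \ge \lambda_1 > 0$ for $n\ge 1$ (this is where the connectedness hypothesis enters), each coefficient $\tfrac{1}{T}\int_0^T \cos^2(\lambda_n t)\,dt$ tends to $\tfrac{1}{2}$ as $T\to\infty$. Since $N$ is finite, interchanging limit and sum is trivial, giving
$$ \lim_{T\to\infty} \frac{1}{T}\int_0^T \int_{\Omega} \bigl(u_{x_0,N}-u_{y_0,N}\bigr)^2\, dx\, dt = \frac{1}{2} \sum_{n=1}^{N} \bigl(\phi_n(x_0)-\phi_n(y_0)\bigr)^2 = \frac{1}{2}\, d_N(x_0,y_0)^2. $$

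I do not anticipate a serious obstacle; the only subtlety is the role of $\lambda_1 > 0$. If the graph were disconnected, the zero eigenvalue would produce a non-oscillating term $\cos^2(0\cdot t)\equiv 1$ whose time-average is $1$ rather than $\tfrac{1}{2}$, and the identity would fail by a factor depending on the constant components. The hypothesis $\lambda_1>0$ (equivalently, the sum starting at $n=1$ skips the constant eigenfunction) exactly rules this out. Everything else is orthonormality plus the Riemann--Lebesgue-type decay of a single oscillatory integral.
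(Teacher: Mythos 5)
Your proposal is correct and follows essentially the same route as the paper's own proof: expand the projected wave solutions in the eigenbasis, use orthonormality to reduce the spatial integral to $\sum_{n=1}^{N}\cos^2(\lambda_n t)(\phi_n(x_0)-\phi_n(y_0))^2$, and then take the Ces\`aro time-average of $\cos^2(\lambda_n t)$, which tends to $\tfrac{1}{2}$ precisely because $\lambda_n\ge\lambda_1>0$. Your explicit bound via $\cos^2\theta=\tfrac12(1+\cos 2\theta)$ and your remark on how disconnectedness would break the identity are slightly more detailed than the paper's treatment, but the argument is the same.
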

\begin{proof} By definition, we have that
$$ u_{N,x_0}(x,t) =  \sum_{n = 0}^{N}{\cos{(\lambda_ n t)} \left\langle \phi_n, \delta_{x_0} \right\rangle \phi_n(x) } \quad \mbox{and} \quad u_{N,y_0}(x,t) =  \sum_{n = 0}^{N}{\cos{(\lambda_ n t)} \left\langle \phi_n, \delta_{y_0} \right\rangle \phi_n(x) }  $$
We explicitly have that 
$$ \phi_0(x) = \frac{1}{\sqrt{|\mathcal{M}|}} \qquad \mbox{and} \quad  \left\langle \phi_n, \delta_{z} \right\rangle = \phi_n(z).$$
Since the $\phi_n$ are orthonormal in $L^2(\Omega)$, the Pythagorean theorem applies and
\begin{align*}
\int_{\Omega}{(u_N(x,t) - w_N(x,t))^2 dx} &= \int_{\Omega}{ \left( \sum_{n = 1}^{N}{ \cos{(\lambda_ n t)} (\phi_n(x_0) - \phi_n(y_0)) \phi_n(x)  } \right)^2   dx}\\
&= \sum_{n=1}^{N}{\cos{(\lambda_ n t)}^2 (\phi_n(x_0) - \phi_n(y_0))^2   \int_{\Omega}{\phi_n(x)^2 dx}}\\
&= \sum_{n=1}^{N}{\cos{(\lambda_ n t)}^2 (\phi_n(x_0) - \phi_n(y_0))^2}
\end{align*}
and, since $\lambda_k \geq \lambda_1 > 0$, we easily see that
$$  \lim_{T \rightarrow \infty}{\frac{1}{T} \int_{0}^{T}{\cos{(\lambda_n z)}^2dz}} = \frac{1}{2}$$
and therefore
$$ \lim_{T \rightarrow \infty}{\frac{1}{T} \int_{0}^{T}{ \sum_{n=1}^{N}{\cos{(\lambda_ n t)}^2 (\phi_n(x_0) - \phi_n(y_0))^2} dt}} = \frac{1}{2} \sum_{n=1}^{N}{ (\phi_n(x_0) - \phi_n(y_0))^2}.$$
\end{proof}
\textit{Remark.} If $\mathcal{M}$ is not connected but has multiple connected components, then the argument shows
$$    \frac{1}{2} d_N(x_0, y_0)^2 \leq \lim_{T \rightarrow \infty}{\frac{1}{T} \int_{0}^{T}{ \int_{\Omega}{(u_{x_0,N}(x,t) - u_{x_0,N}(x,t))^2 dx} dt}}  \leq d_N(x_0, y_0)^2.$$

\section{The Algorithm}
\begin{flushright}
\textit{If you want to see something, you send waves in its general direction, you don't throw heat at it.}\\
--  attributed to Peter Lax 
\end{flushright}

\subsection{Spectral Echolocation.} The Theorem discussed in the preceeding section suggests that we lose no information when using distances induced by the wave equation. The main underlying
idea of our approach is that we naturally obtain \textit{additional} information. We emphasize that the algorithm we describe here is \textit{not} a dimension-reduction
algorithm -- instead, it can be regarded as a natural pre-processing step to enhance the effectiveness of spectral methods. Furthermore, it is more appropriate to speak
of an entire family of algorithms: there are a variety of parameters and norms one could define and the optimal choice is not a priori clear.\\

\begin{mdframed}
\textbf{Spectral Echolocation Algorithm.}
\begin{quote}
\begin{enumerate} 
\item \textbf{Input.}~A weighted graph $G = (V,E,w)$.
\item Compute the first $N$ Laplacian eigenfunctions $\left\{\phi_1, \phi_2, \dots, \phi_N\right\}$.
\item Pick $k$ random points $v_1, v_2, \dots, v_k \in V$.
\item Define $k$ functions $f_1, \dots, f_k: V \rightarrow \mathbb{R}$ as 'mollifications' of
the indicator functions associated to the $k$ points. We propose taking the existing affinities given by the weights
$$ f_i(x) = w_{i,x}. $$
\item Pick $\varepsilon > 0$. The projection of the solution of the attenuated wave equation with $f_i$ as initial datum onto the first $N$ eigenfunctions is
$$ u_i(t,x) = \sum_{n=1}^{N}{  \cos{(\lambda_k t)} e^{-\varepsilon \lambda_k t}  \left\langle f_i, \phi_n\right\rangle\phi_n(x)}$$
\item  Define a new weight between any two points $v_1, v_2 \in V$ given by
$$ d_i(v_1, v_2) =\| u_i(t,v_1) - u_i(t,v_2)\|^{\alpha}_{X} + \| (u_i)_t(t,v_1) - (u_i)_t(t,v_2)\|^{\beta}_{Y},$$
where $u_t$ is the derivative in time and $X,Y$ are any norms on the space of continuous functions $C[0,T]$ and $\alpha, \beta > 0$.
\item \textbf{Output.} A distance $d:V \times V \rightarrow \mathbb{R}_{+}$ synthesized out of $d_1, \dots, d_k$, examples being
$$d(v_1,v_2) = \min_{1 \leq i \leq k}{ d_i(v_1, v_2)}  \quad \mbox{and} \quad d(v_1,v_2) = \frac{1}{k}\sum_{i=1}^{k}{ d_i(v_1, v_2)}.$$
\end{enumerate} 
\end{quote}
\end{mdframed}
\vspace{10pt}
The underlying idea is quite simple: we start with various initial distributions of 'water' at rest. We want these initial configurations to be
relatively smooth so as to avoid drastic shocks. Given this initial configuration, we follow the evolution of the wave equation
at our desired level of resolution (given by restricting to $N$ eigenfunctions). Points that are nearby should always have
comparable levels of water and comparable levels of change in water level and this is measured by the integral norm.  
The exponentially decaying term $\exp(-\varepsilon \lambda_k t)$ in the evolution of the solution
$$ u_i(t,x) = \sum_{n=1}^{N}{  \cos{(\lambda_k t)} e^{-\varepsilon \lambda_k t}  \left\langle f_i, \phi_n\right\rangle \phi_n(x)}$$
comes from actually solving for the attenuated wave equation which further reduces high-frequency shocks and increases stability.
As described above, setting $X = L^2$, $\varepsilon = 0$, squaring the norm, ignoring the derivative term completely and letting $T \rightarrow \infty$ recovers
the original weights of the graph completely. In practice, we have found that $T = 1, \varepsilon = \lambda_1^{-1}$, $X = Y = L^1[0,1]$ and $\alpha=\beta=1$ yield
the best results, however, this is a purely experimental finding -- identifying the best parameters and giving a theoretical justification
for their success is still an open problem.

\section{Examples of Noisy Clustering}

\subsection{Parameters} We always consider $N=50$ eigenfunctions and $k=10$ randomly chosen initial spots from which to send out waves. The attenuation factor
is always chosen as $\varepsilon = \lambda_1^{-1}$ and time is chosen so that the first eigenfunction performs one oscillation $T=\lambda_1^{-1}$. Further parameters
are $X = L^1[0,T] = Y$ and $\alpha = 1 = \beta$. This uniquely defines the $1 \leq i \leq k$ individually induced distances, we always condense them into one distance
using either 
$$d(v_1,v_2) = \min_{1 \leq i \leq k}{ d_i(v_1, v_2)}  \qquad \mbox{or} \qquad d(v_1,v_2) = \frac{1}{k}\sum_{i=1}^{k}{ d_i(v_1, v_2)}.$$
Generally, continuous geometries benefit from taking the minimum because of increased smoothness whereas clustering problems are better treated using the second type of combined distance.

\subsection{Geometric Clusters with Erroneous Edges}\label{geomClusterErroneousEdge}

A benefit of the refined wave echolocation metric is that, unlike heat, the transmission between two clusters does not simply depend on the number of edges but also their topology. We consider two clusters in $\mathbb{R}^2$ each of which consists of a 1000 points arranged in a unit disk and the two unit disks are well-separated -- the obstruction comes from a large number of random edges; specifically, every point is randomly connected to 4\% in the other cluster.  Heat diffuses quickly between these two clusters due to the large number of intra cluster connections.  For this reason, the heat embedding of the fails to separate the clusters (however, it is does preserve some aspects of the topology, see Fig. \ref{fig:randErrorEmbeddings}).  In contrast, however, the wave echolocation metric manages 
a clear separation of objects.

\begin{figure}[h!]
\begin{tabular}{cc}
\includegraphics[width=.48\textwidth]{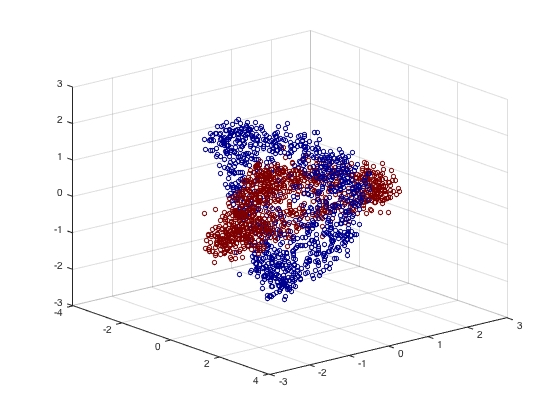} & 
\includegraphics[width=.48\textwidth]{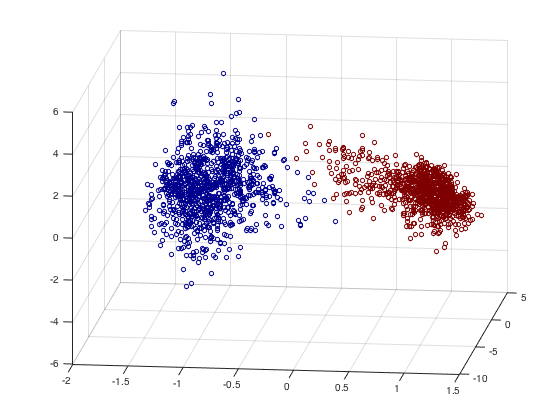} 
\end{tabular}
\caption{Heat kernel embedding (left) and embedding of the wave echolocation metric (right).  We used averaging across 10 starting positions.}\label{fig:randErrorEmbeddings}
\end{figure}

\subsection{Social Networks}
Social networks are a natural application of spectral methods, and mirror the synthetic example in Section \ref{geomClusterErroneousEdge}.  We examine spectral echolocation on the Facebook social circles dataset from \cite{facebookNetwork}, which consists of 4039 people in 10 friend groups.   While there exist clear friend groups, edges within the clusters are still somewhat sparse, and there exist erroneous edges between clusters.  
One goal is to propagate friendship throughout the network and emphasize clusters.  Figure \ref{fig:socialNetworkAffinity} shows the original affinity matrix, sorted by cluster number.  We also compute the diffusion distance and spectral echolocation distance, and display the affinity matrix $W_{i,j} = exp(-d(v_i, v_j)^2)$ for both.  Spectral echolocation not only compresses the inter cluster distances, it also discovers weak similarity between different clusters that share a number of connections.  

\begin{figure}[!h]
\begin{tabular}{ccc}
\includegraphics[width=.32\textwidth]{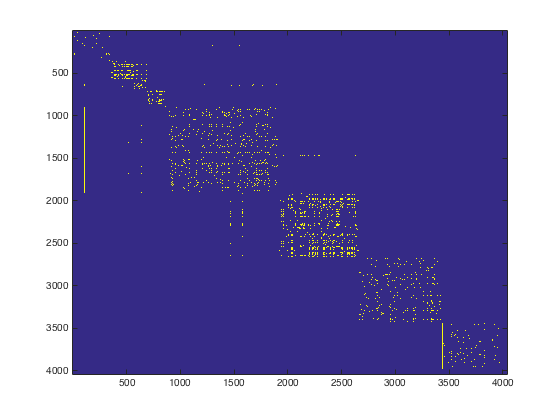} &
\includegraphics[width=.32\textwidth]{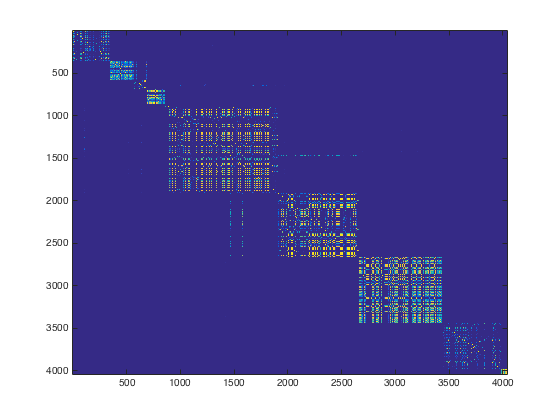} &
\includegraphics[width=.32\textwidth]{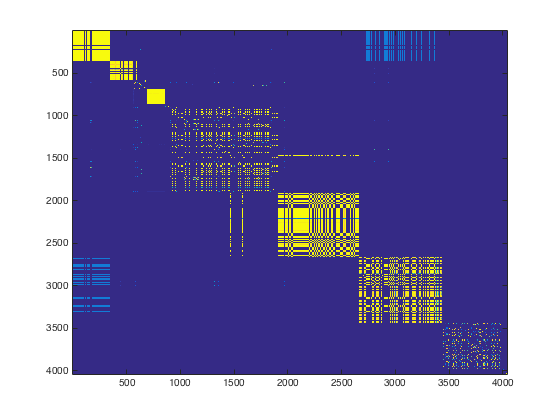} \\
\includegraphics[width=.32\textwidth]{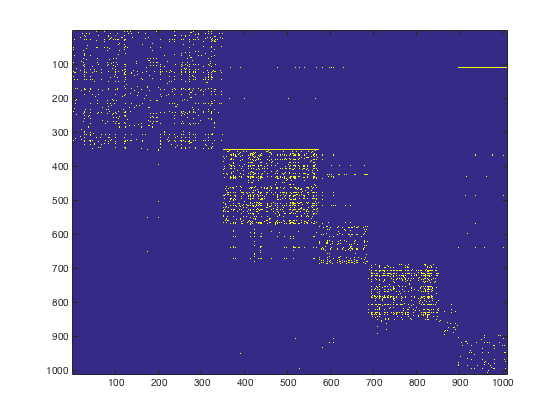} &
\includegraphics[width=.32\textwidth]{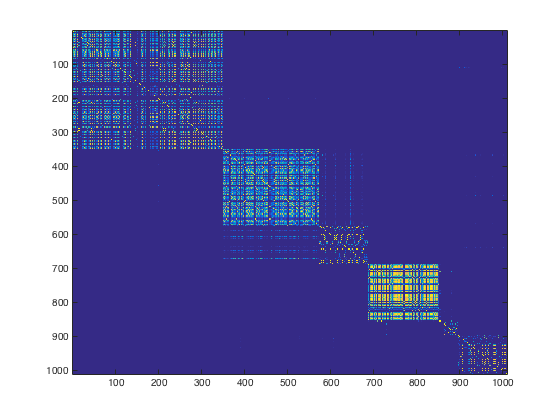} &
\includegraphics[width=.32\textwidth]{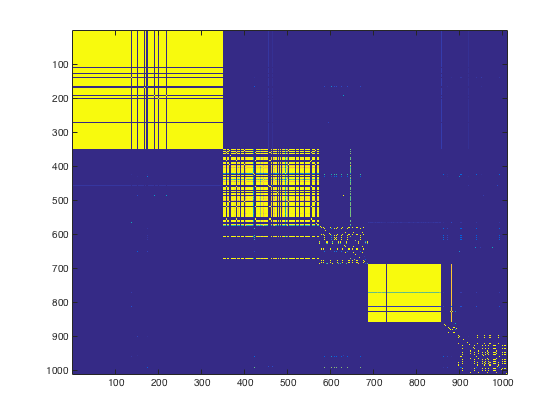} \\
\end{tabular}
\captionsetup{width=0.95\textwidth}
\caption{Original Facebook affinity matrix (left), heat kernel embedding affinity (center), and wave embedding affinity (right).  Bottom is zoomed in version of the top.}\label{fig:socialNetworkAffinity}
\end{figure}

\textit{Substructures.} Another natural goal  is to detect small ``friendship circles'' within the larger network.  These circles are based off of the features that brought the group together (e.g. same university, same work, etc).  Overall there are 193 circles, though many only contain two or three people, and many of the larger circles are nowhere close to a dense network.  We compare the average affinity within the 100 largest circles across several techniques.  For both the standard heat kernel embedding as well as the wave embedding, we build a new graph between people based on whether two people are ``10-times closer than chance'', which is defined as
\begin{eqnarray*}
\{x,y\}\in E \iff e^{-d(x,y)^2/\epsilon} > 10\cdot \mathbb{E}_{x'}\mathbb{E}_{y'} [e^{-d(x',y')^2/\epsilon}].
\end{eqnarray*}
We now compare the typical number of edges in each circle for the original data as well as the two embeddings -- we observe a dramatic improvement. The results are displayed in Figure \ref{fig:circles}.

\begin{figure}[h!]
\begin{tabular}{ccc}
\includegraphics[width=.32\textwidth]{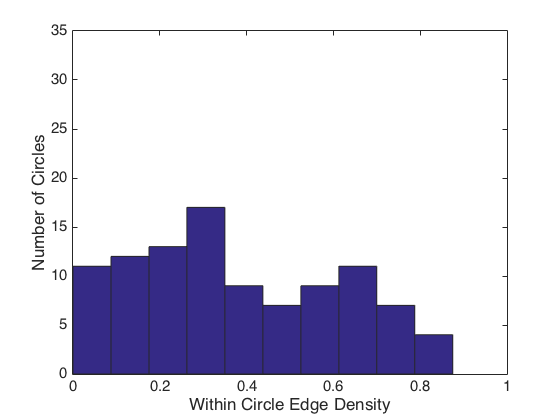} & 
\includegraphics[width=.32\textwidth]{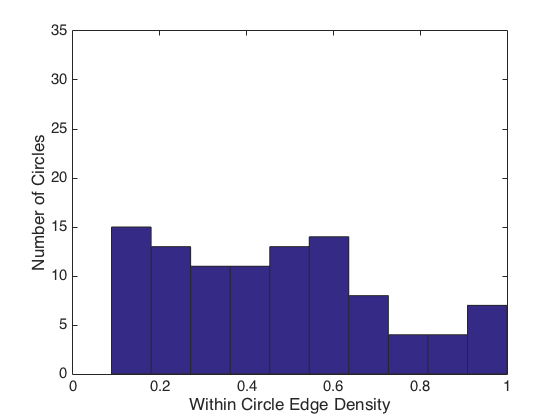} & 
\includegraphics[width=.32\textwidth]{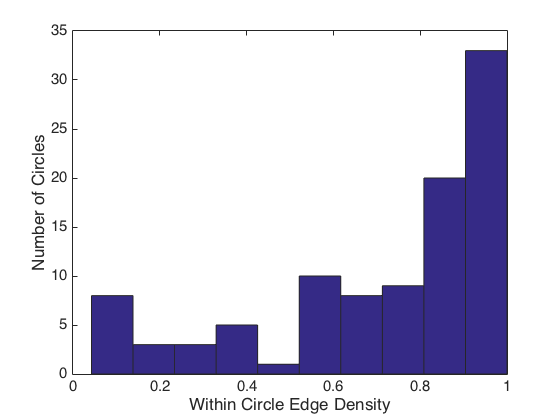} 
\end{tabular}
\caption{Number of friendship edges in original network (left), heat kernel embedding network (center), wave embedding network (right).}\label{fig:circles}
\end{figure}

\section{Examples with Heterogeneous Dimensionality}

\subsection{Plane with Holes}
We examine the behavior of waves in a porous medium.  Figure \ref{fig:planeWithHoles} shows that the wave equation travels more quickly through the bridges (the wave speeds up while in
a bottleneck), and gets caught in the intersections.  preserves the topology of the data and emphasizes the holes.  

\begin{figure}[!h]
\begin{tabular}{ccc}
\includegraphics[height=.25\textwidth]{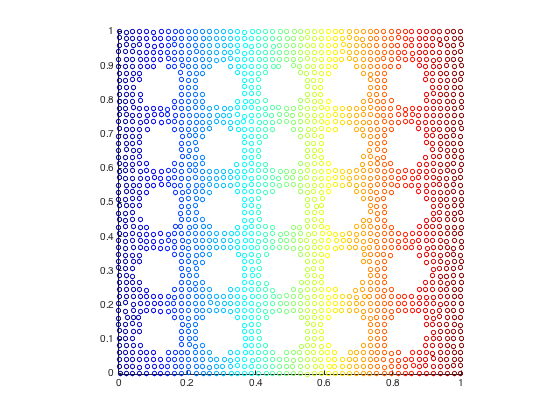} & 
\includegraphics[height=.25\textwidth]{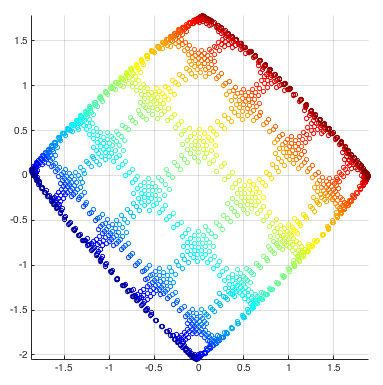} &
\includegraphics[height=.25\textwidth]{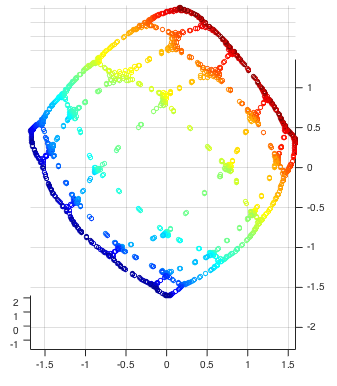} 
\end{tabular}
\captionsetup{width=0.95\textwidth}
\caption{Original Data (left), heat kernel Embedding (center), and Wave Embedding (right).  We used the minimum distance across 10 starting positions.}\label{fig:planeWithHoles}
\end{figure}

\subsection{Union of Manifolds}
Another interesting property of the wave equation is that the change in position $u_t$ undergoes a dramatic change whenever the dimensionality of the manifold changes: waves
are suddenly forced into a very narrow channel or -- going the other direction -- are suddenly evolving in many different directions. 
We demonstrate this first in Figure \ref{fig:6D1D}.  The data consists of two six-dimensional spheres, connected by a one-dimensional line.  The low frequency eigenfunctions of the heat kernel travel from one end to the other without much recognition of the varying dimensionality.  However, the wave embedding creates a gap between the bridge and the two spheres, with the variation of the first non-trivial eigenfunction being supported almost entirely on the bridge.

\begin{figure}[!h]
\begin{tabular}{ccc}
\includegraphics[width=.33\textwidth]{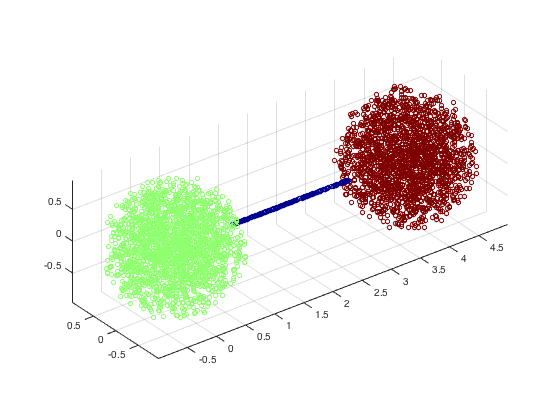} & 
\includegraphics[width=.33\textwidth]{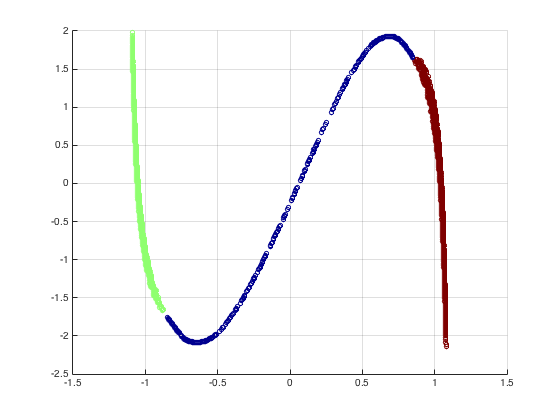} &
\includegraphics[width=.33\textwidth]{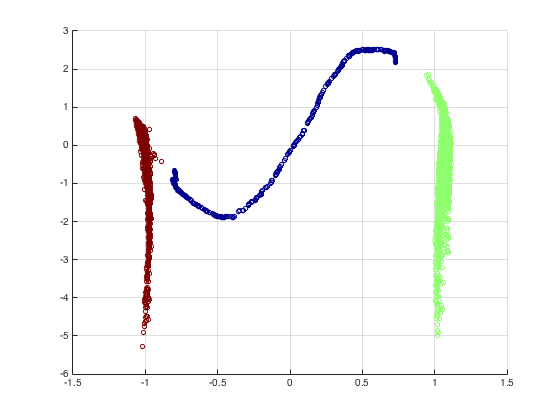} 
\end{tabular}
\caption{6D spheres with 1D bridge (left), heat kernel embedding (center), and wave embedding (right).  For this problem, we use the min distance across 10 starting positions.}\label{fig:6D1D}
\end{figure}

\subsection{Union of Manifolds with different dimensions}
We also consider the same problem with spheres of different dimensions as in Figure \ref{fig:6D3D1D}.  The data consists of a six-dimensional sphere and a three-dimensional sphere connected by a one-dimensional line.  The figure displays the affinity matrices of the points, with the first block representing the six-dimensional sphere, the second block representing the three-dimensional sphere, and the third small block for the bridge.  Notice that, in the heat kernel affinity, the bridge has affinity to more points in the lower dimensional sphere than the higher dimensional sphere.  Also notice that the wave embedding separates the six-dimensional sphere much further from the bridge than the three-dimensional sphere.\\
 
\begin{figure}[!h]
\begin{tabular}{cc}
\includegraphics[width=.48\textwidth]{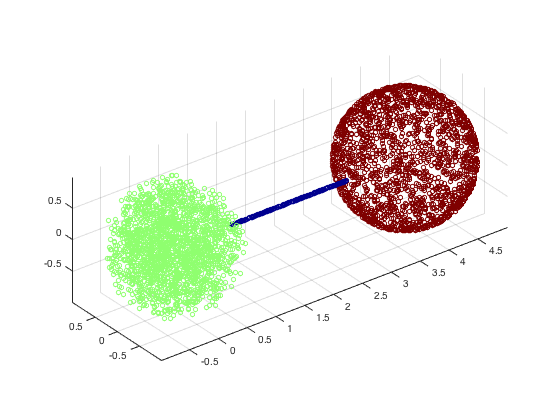} &
\includegraphics[width=.48\textwidth]{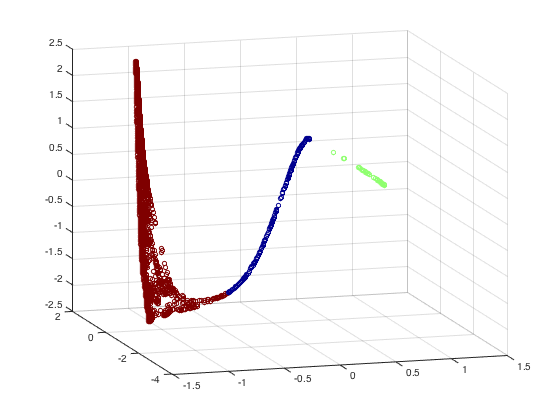} \\
\includegraphics[width=.48\textwidth]{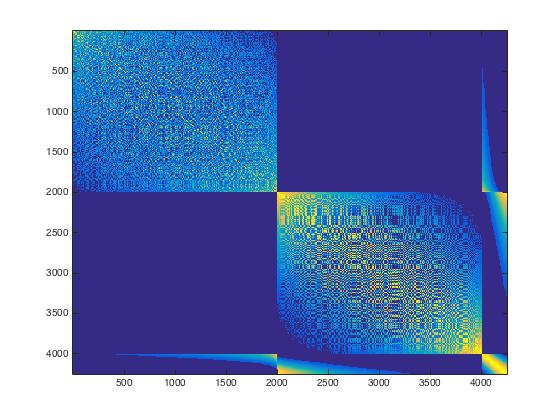} & 
\includegraphics[width=.48\textwidth]{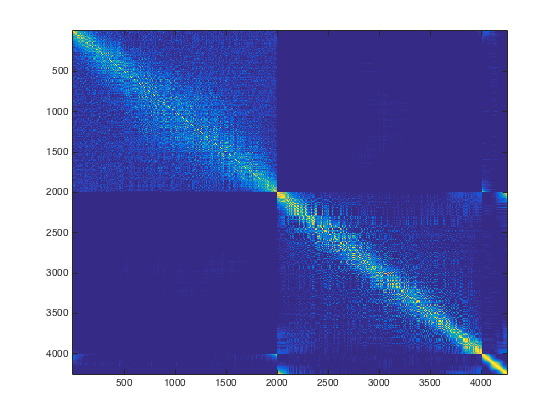} 
\end{tabular}
\captionsetup{width=0.95\textwidth}
\caption{6D sphere in cluster 1 and 3D sphere in cluster 2 with 1D bridge (top left), wave embedding (top right), heat kernel affinity matrix (bottom left), and wave affinity matrix (bottom right).  For this problem, we use the min distance across 10 starting positions.}\label{fig:6D3D1D}
\end{figure}

Finally, we consider two six-dimensional spheres connected via a two-dimensional bridge in Figure \ref{fig:6D2D}.  Specifically, we examine the local affinities of several points on the bridge.  Notice that, for the wave equation, the affinities of points on the bridge are far from isotropic and clearly distinguish the direction the wave is traveling between the two spheres.  Moveover, points on the bridge near the spheres have much lower affinity to points on the sphere than their heat kernel counterparts.  
\begin{figure}[!h]
\begin{tabular}{cc}
\includegraphics[width=.48\textwidth]{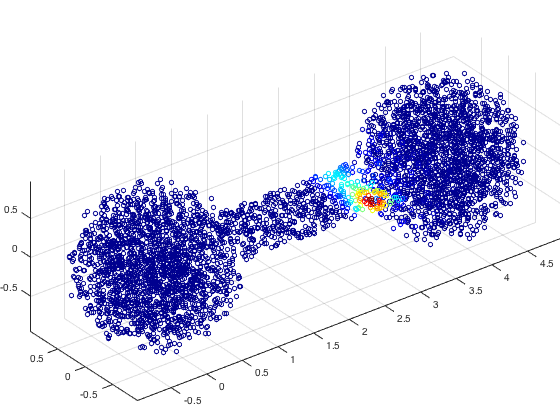} & 
\includegraphics[width=.48\textwidth]{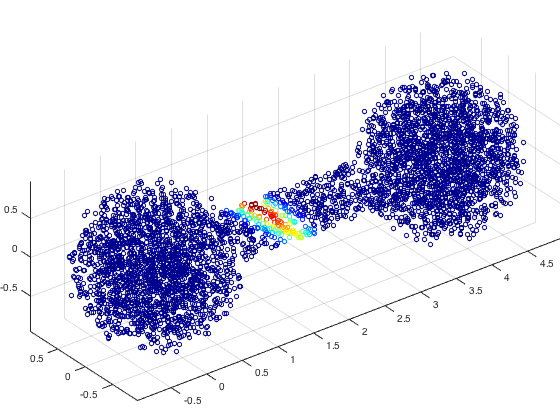} \\ 
\includegraphics[width=.48\textwidth]{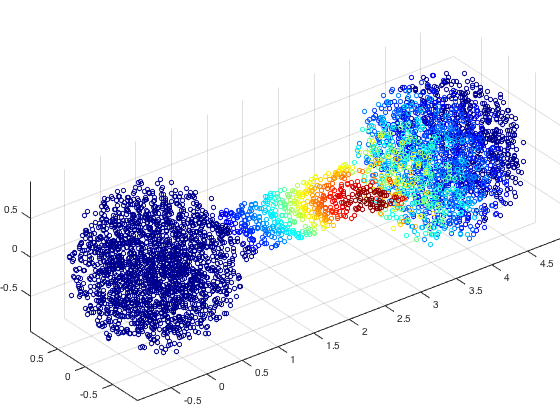} & 
\includegraphics[width=.48\textwidth]{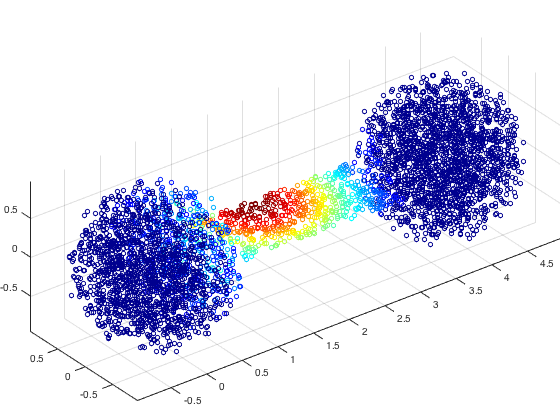}  
\end{tabular}
\captionsetup{width=0.95\textwidth}
\caption{Neighborhoods of chosen points in 6D spheres--2D bridge example for wave embedding using averaging across 10 starting positions (top) and heat kernel affinity (bottom).}\label{fig:6D2D}
\end{figure}

\section{Comments and Remarks}

\subsection{Other partial differential equations}
Spectral echolocation has two novel components:
\begin{enumerate}
\item the evolution of a dynamical system on an existing weighted graph 
\item and the construction of a refined metric using information coming from the behavior of the dynamical system.
\end{enumerate}
Our current presentation had its focus mainly on the case where the dynamical system is given by the wave equation,
however, it is not restricted to that. Let us quickly consider a general linear partial differential equation of the type
$$ \frac{\partial}{\partial t} u(t,x) = D u(t,x) \qquad \mbox{on}~\mathbb{R},$$
where $D$ is an arbitrary differential operator. The Fourier transform in the space
variable yields a separation of frequencies 
$$ \frac{\partial}{\partial t} \widehat{u}(t,\xi) = i p(D, \xi) \widehat{u}(t, \xi),$$
where $p(D,\xi)$ is the symbol of the differential operator at frequency $\xi$.
This is a simple ordinary differential equation whose solution can be written down
explicitly as
$$ \widehat{u}(t, \xi) = e^{i t p(D, \xi)} \widehat{u}(0, \xi)$$
and taking the Fourier transform again allows us to write the solution as
$$  u(t,x) = \int_{\mathbb{R}}{ e^{i x \cdot \xi + i t p(D, \xi)} \widehat{u}(0, \xi) d\xi}.$$
Differential equations for which this scheme is applicable include the classical heat equation
($D = \Delta$) but also variants that include convolution with a sufficiently nice potential ($Du = \Delta u + V * u$), the Airy equation ($D = \partial_{xxx}$) and, more
generally, any sufficiently regular pseudo-differential operator (for example $\sqrt{-\Delta + c^2}$).
The crucial insight is that the abstract formulation via the Fourier transform has a direct
analogue on weighted graphs: more precisely, given eigenfunctions $\phi_1, \dots, \phi_N$
associated to eigenvalues $\lambda_1, \dots, \lambda_N$, the natural `frequency' associated
to $\phi_k$ is, of course, $\lambda_k$ and we may define the solution of
$$ \frac{\partial}{\partial t} u(t,x) = D u(t,x)$$
in the same way via
$$ u(t,x) = \sum_{k=1}^{N}{ e^{  p(\lambda) t} \left\langle u(0,x), \phi_k \right\rangle_{L^2} \phi_k(x)}.$$

\begin{figure}[h!]
\begin{tabular}{ccc}
\includegraphics[width=.32\textwidth]{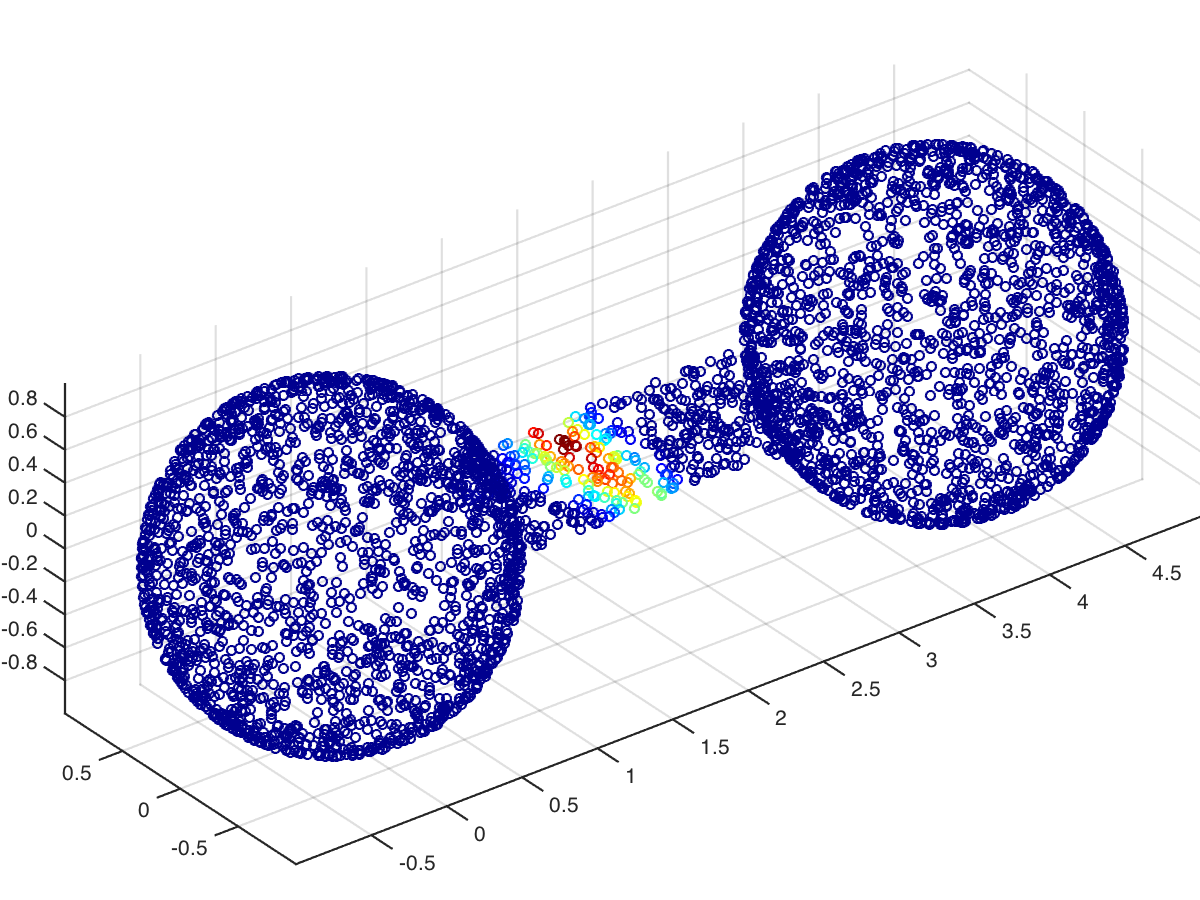} & 
\includegraphics[width=.32\textwidth]{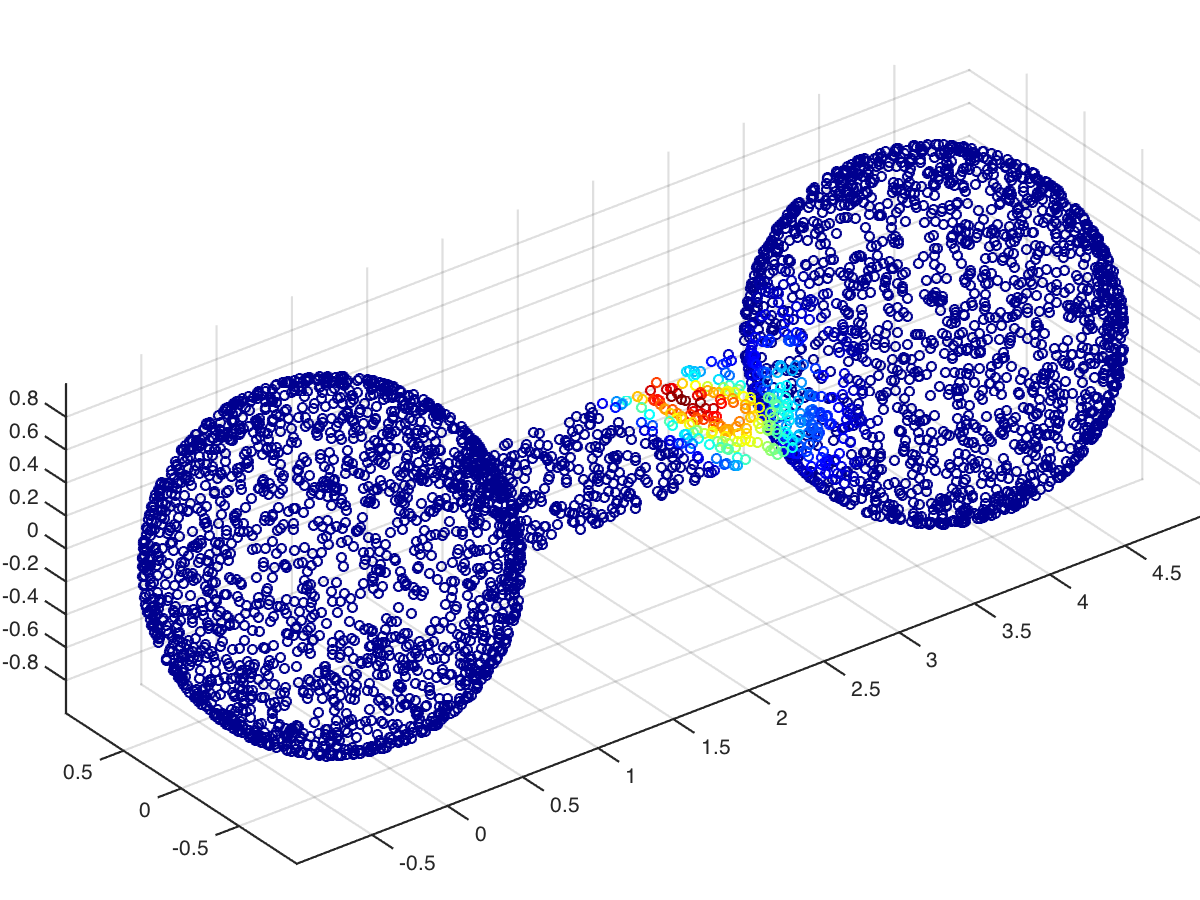} & 
\includegraphics[width=.32\textwidth]{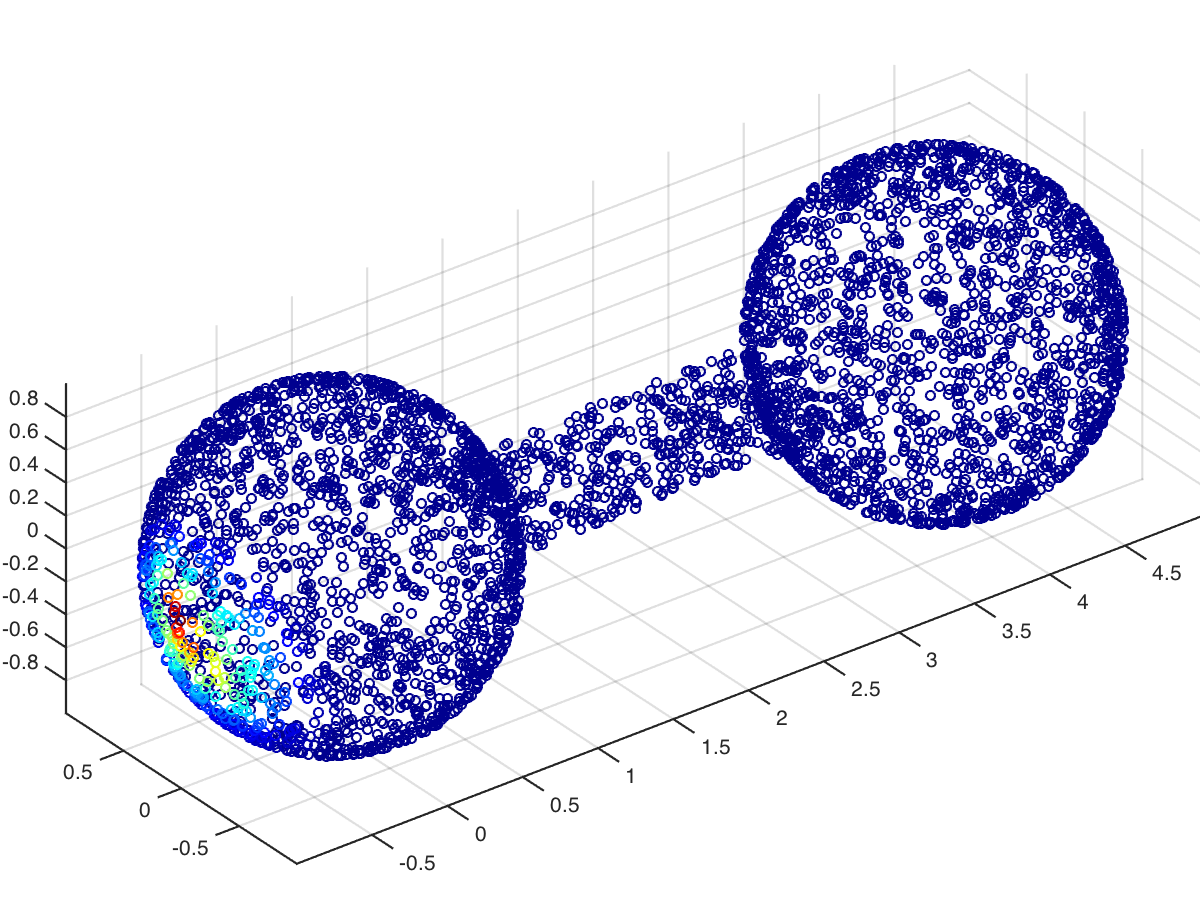} \\
\includegraphics[width=.32\textwidth]{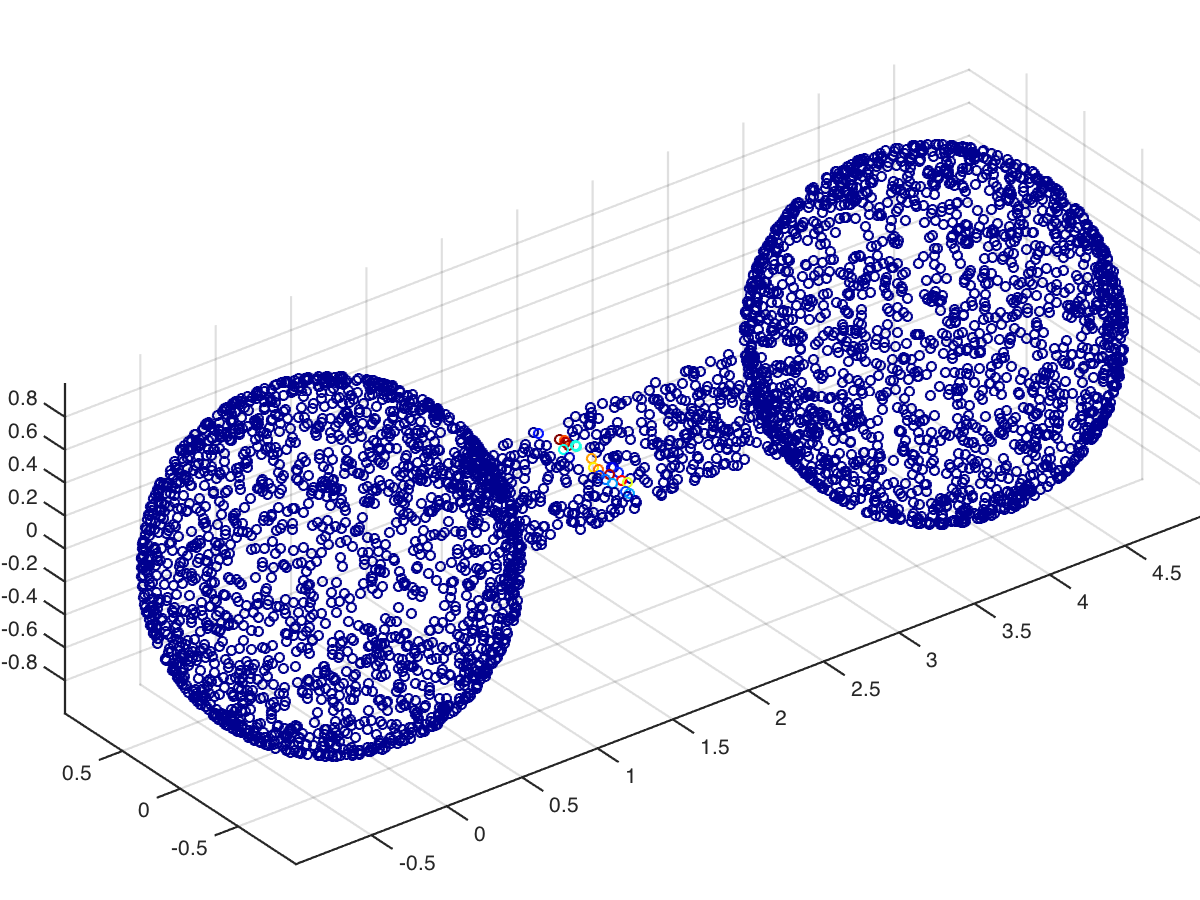} & 
\includegraphics[width=.32\textwidth]{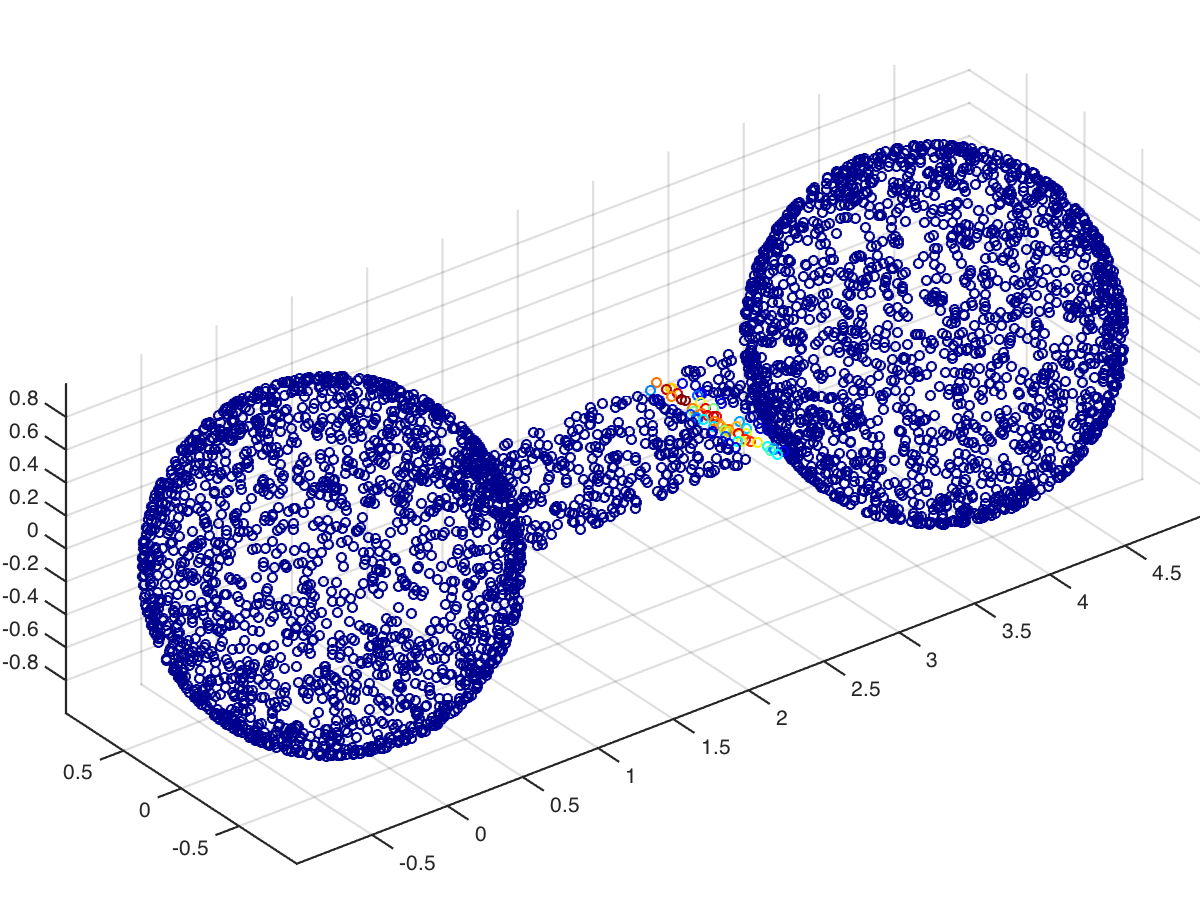} & 
\includegraphics[width=.32\textwidth]{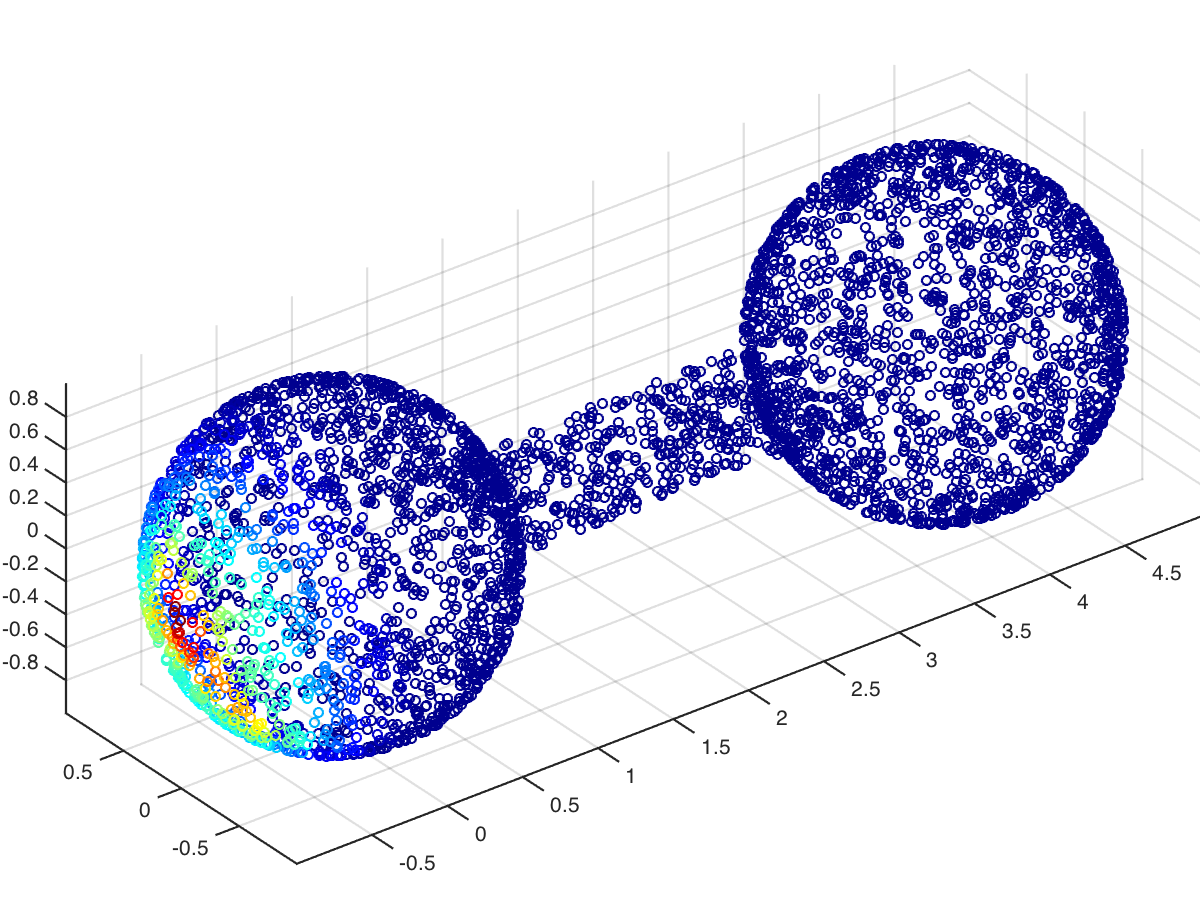} \\
\includegraphics[width=.32\textwidth]{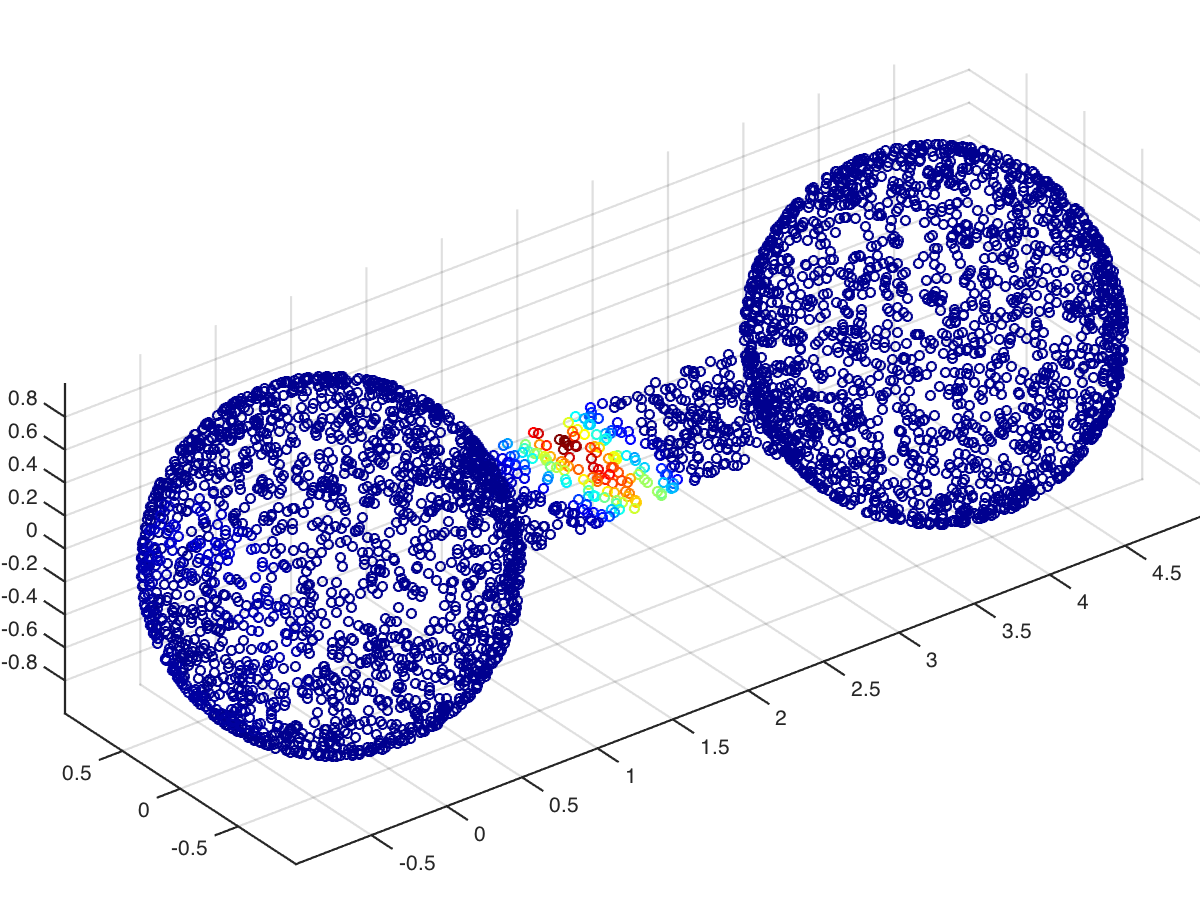} & 
\includegraphics[width=.32\textwidth]{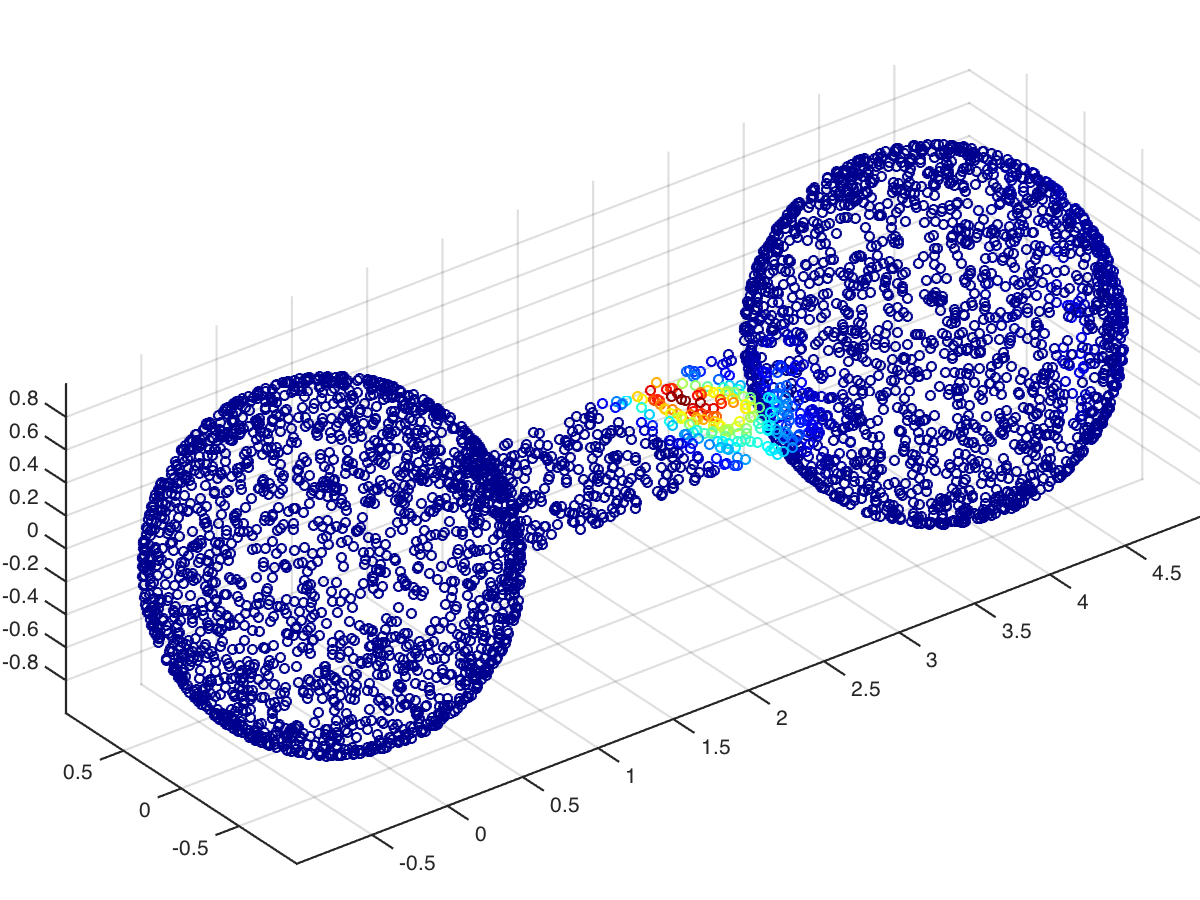} & 
\includegraphics[width=.32\textwidth]{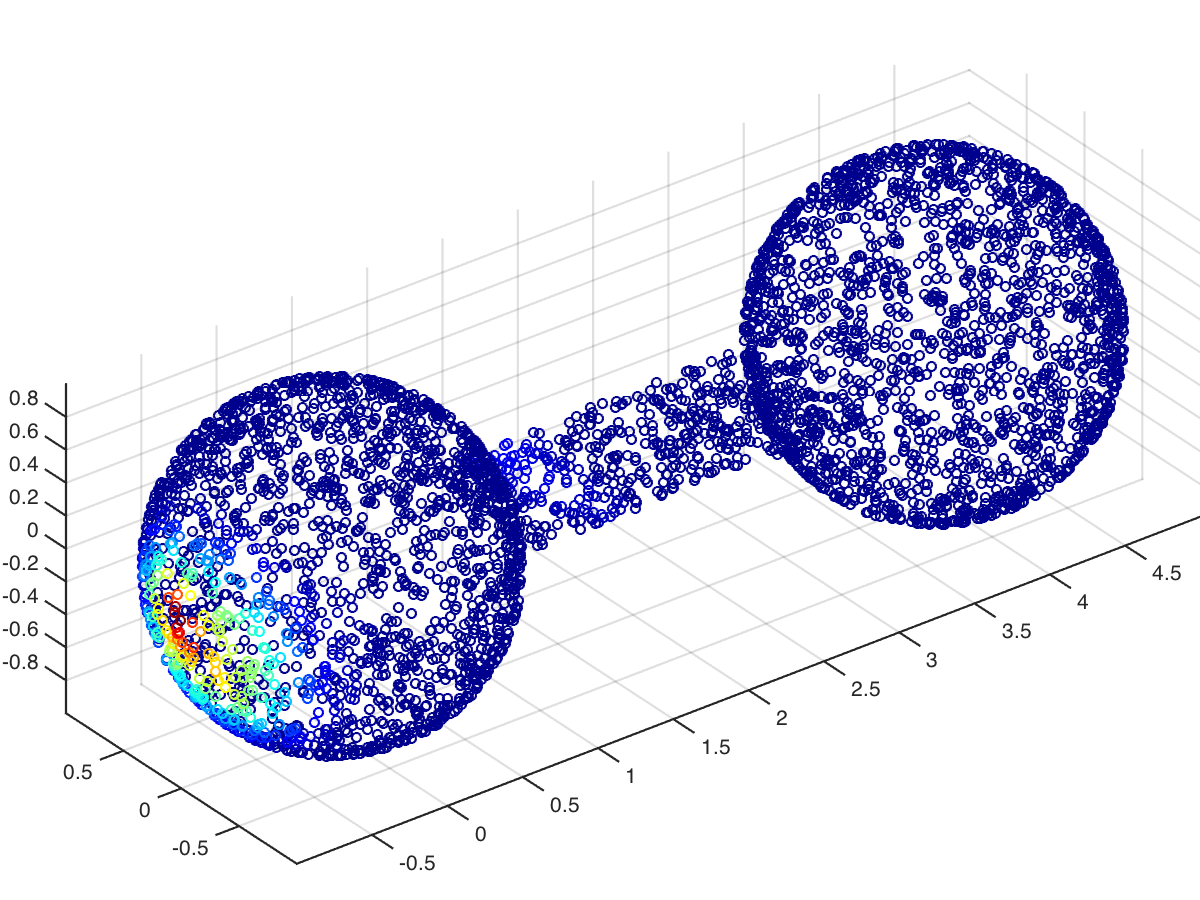} \\
\includegraphics[width=.32\textwidth]{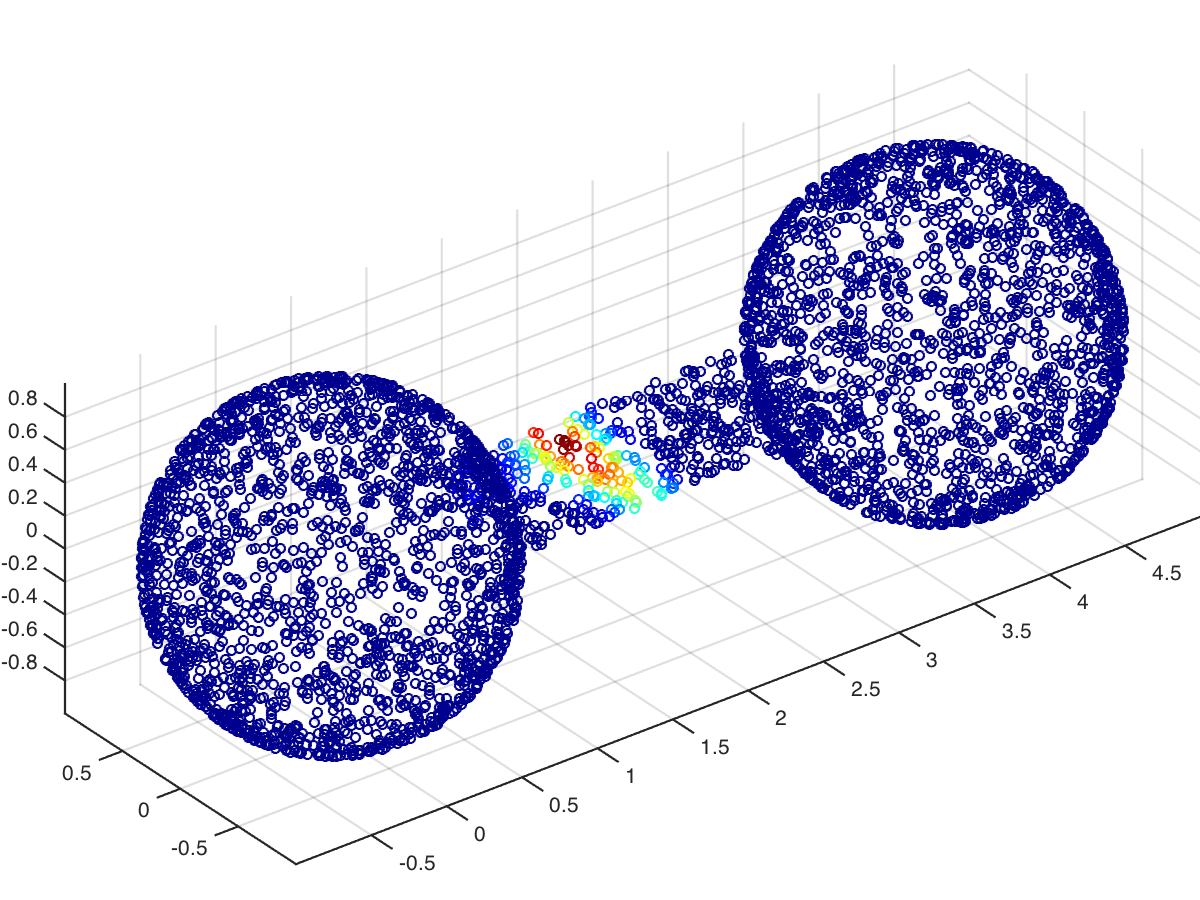} & 
\includegraphics[width=.32\textwidth]{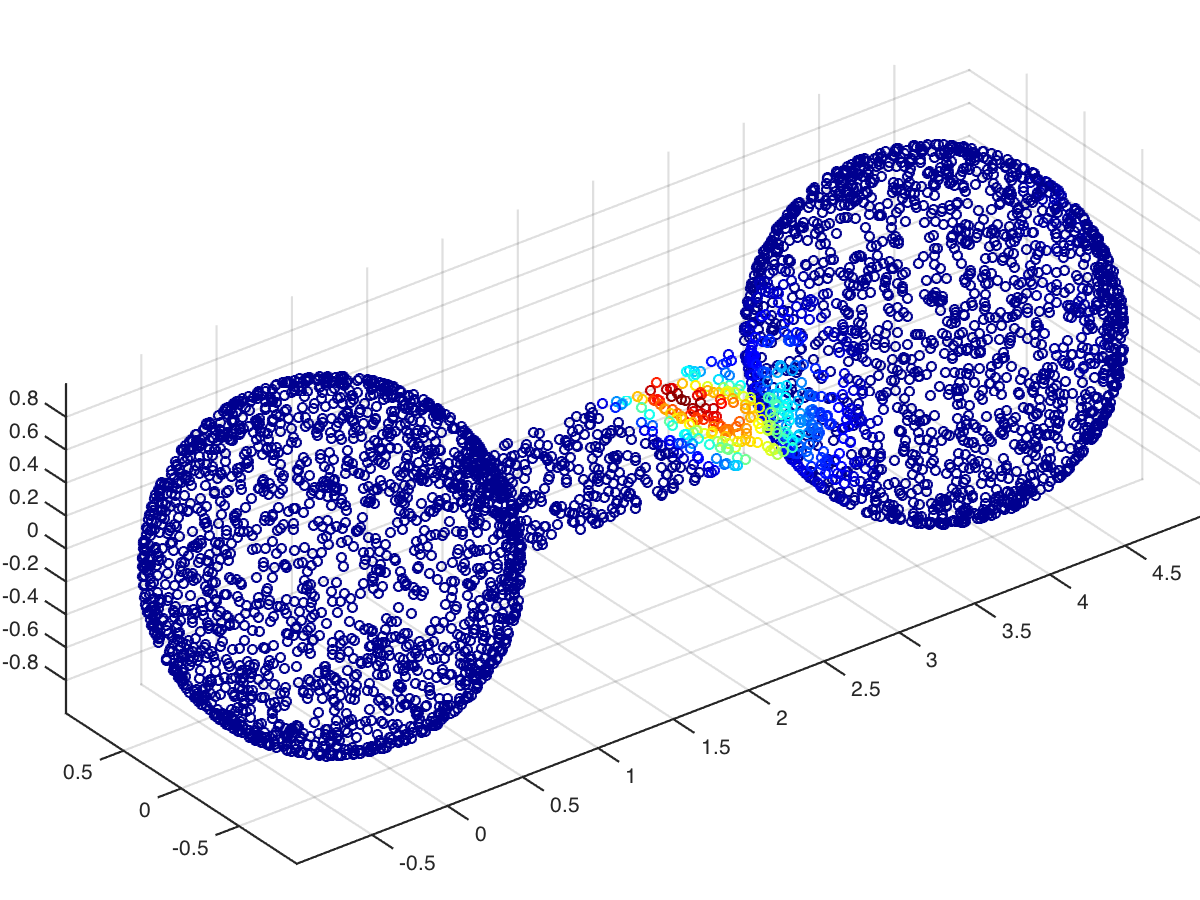} & 
\includegraphics[width=.32\textwidth]{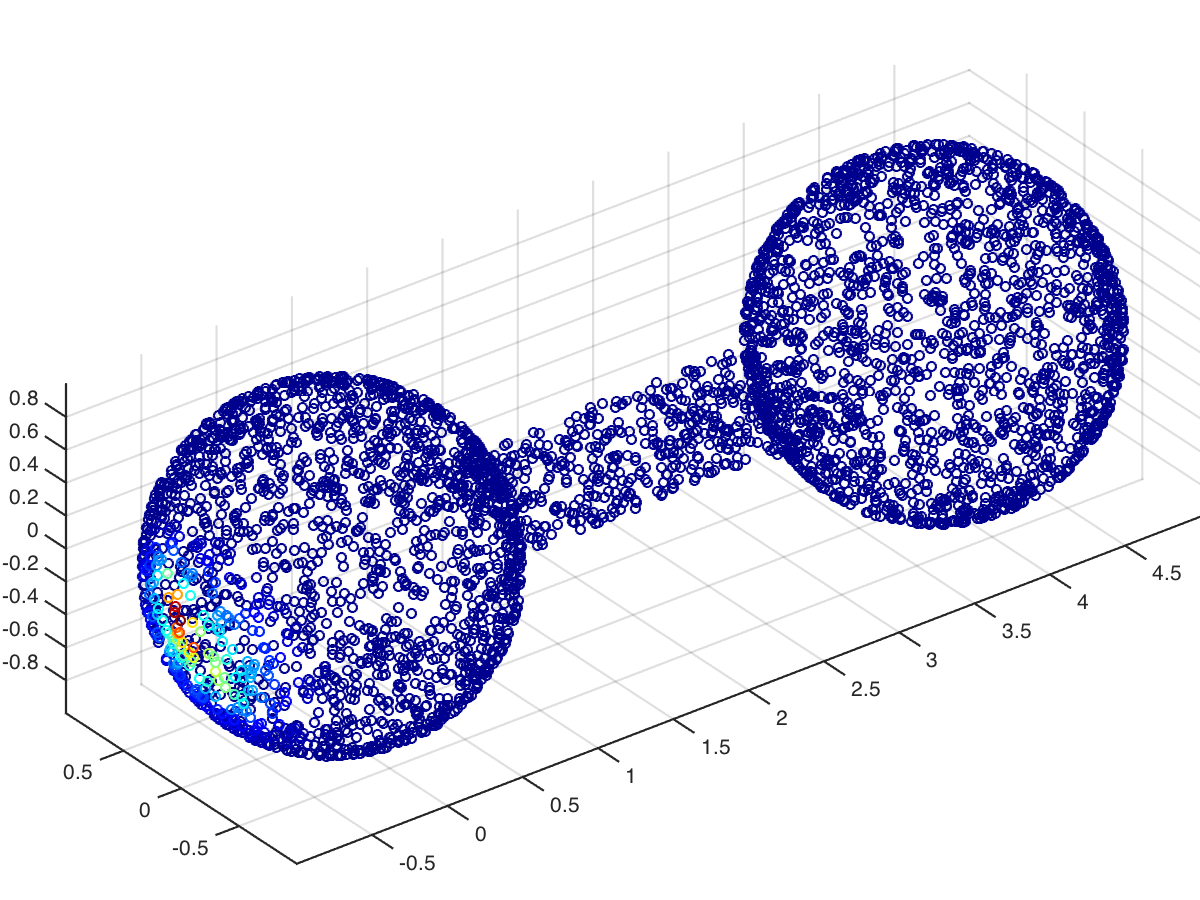} 
\end{tabular}
\caption{Wave (top), Heat ($2^{nd}$ row), Airy ($3^{rd}$ row), and Schrodinger (bottom).}
\end{figure}

Natural `symbols' include heat $p(\lambda) = - \lambda^2$, Airy $p(\lambda) = i \lambda^3$ or Schrodinger $p(\lambda) = i \lambda$. Naturally, the same analysis goes through for equations of the type $u_{tt} = Du$ and our analysis of the attenuated wave equation above follows that scheme. The analysis of partial differential equations on graphs is still in its infancy and our original motivation for using the wave equation is a number of desirable properties that seem uniquely suited for the task at hand: no dissipation of energy and finite speed of propagation. Numerical examples show that different symbols $p(\lambda)$ can induce very similar
neighborhoods: we believe that this merits further study; in particular, a thorough theoretical analysis of the
proposed family of algorithms is highly desired.

\subsection{Special case: Heat equation} We want to emphasize that our approach is novel even when we
chose to emulate the classical heat propagation. This method can outperform the classical (unrefined)
embedding via Laplacian eigenmaps even in relatively simple toy examples: we consider the classical 2D dumbbell example in Figure \ref{fig:dumbbell2}.

\begin{figure}[h!]
\begin{tabular}{c}
\includegraphics[width=.5\textwidth]{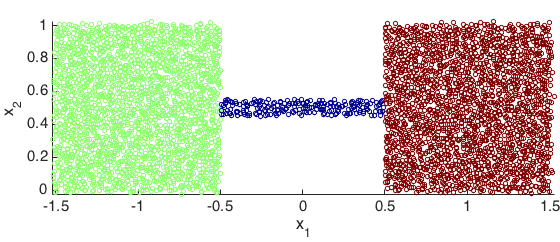} 
\end{tabular}
\caption{The dumbbell domain in our experiment.}\label{fig:dumbbell2}
\end{figure}

This example has a small Cheeger constant due to the bottleneck, which means the first non-trivial eigenfunction will be essentially constant on the boxes and change rapidly on the bottleneck. This classical examples
illustrates well how the first nontrivial eigenfunction can be used as  a classifier and the classical Laplacian
eigenmap works spectacularly well without any further modifications.

\begin{figure}[h!]
\begin{tabular}{c}
\includegraphics[width=.6\textwidth]{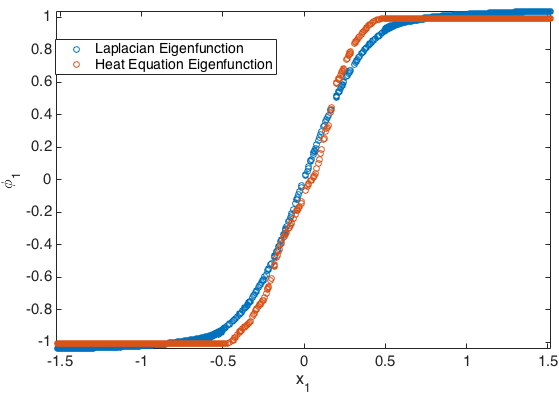}
\end{tabular}
\caption{The values of the first eigenfunction for classical Laplacian eigenmaps (blue) vs. the refined heat metric (blue).}\label{fig:dumbbell}
\end{figure}

Figure \ref{fig:dumbbell} compares the eigenfunction $\phi_1$ of the Laplacian compared to the first non-trivial eigenfunction of the heat equation distance matrix.  We observe that the refined heat metric is a much better approximation to the function
\[f(x) =  \begin{cases} 
      -1 & x_1\leq -0.5 \\
      2x  & -0.5< x_1< 0.5 \\
      1 & x_1\geq 0.5
   \end{cases}
\]
than $\phi_1$ and allows for a more accurate reconstruction of the bridge. We also observe that the nontrivial eigenfunction is essentially and to a remarkable degree constant on the two clusters which further increases its value as a classifier.

\section{Conclusions}
\textit{Summary.} We have presented a new pre-processing technique for classical dimensionality reduction techniques based on spectral
methods.  The underlying new idea comes in three parts: (1) if one computes eigenfunctions of the Laplacian, then one might
just as well use them so simulate the evolution of a partial differential equation on the existing weighted graph, (2) especially
for physically meaningful dynamical systems such as the wave equation one would expect points with high affinity to behave
similarly throughout time and (3) this motivates the construction of a refined metric extracting information coming from the behavior of the dynamical system. \\

\textit{The wave equation.} We were originally motivated by a series of desirable properties of the wave equation on $\mathbb{R}^n$: preservation of regularity and finite speed of propagation. Recall that one of the fundamental differences between the heat equation and the wave equation is that solutions of the heat equation experience an instanteneous gain in smoothness while the wave equation merely preserves the smoothness of the initial datum (and sometimes not even that). Our main point is to show
that this is not arbitrary but due to physical phenomena whose counterparts in the world of data can provide a refined measurement: \textit{the lack of regularity can be helpful}! 
However, as we have shown, there are similar effects for most other partial differential equations and 
theoretical justifications on a precise enough level that they would distinguish between various dynamical
systems are still missing -- we believe this to be a fascinating open problem.\\

\textit{Refined metrics.}   Similarily, our way of refining metrics, either by taking the minimum or by compiling an average, is motivated by considerations (see also \cite{stein}) that are
not specifically tuned to our use of partial differential equations -- another fascinating open question is whether there is
a more natural and attuned way of extracting information.\\

\textbf{Acknowledgement.} The authors are grateful to Raphy Coifman for a series of fruitful discussions
and helpful suggestions. A.C. is supported by an NSF Postdoctoral Fellowship \#1402254, S.S. is supported by an
AMS Simons Travel Grant and INET Grant \#INO15-00038.

\end{document}